\newcommand{\Dc}{{\mathcal D}}
\newcommand{\Nd}{\ensuremath{\mathbb{N}}}
\newcommand{\Pd}{\ensuremath{\mathbb{P}}}
\newcommand{\Rd}{\ensuremath{\mathbb{R}}}
\newcommand{\Sc}{\ensuremath{\mathcal{S}}}
\newenvironment{lemma}{Lemma}{}
\newenvironment{proof}{Proof}{}
\newcommand{\set}[1]{ \ensuremath{\left\{ #1 \right\}} }
\begin{document}

\title{Evolutionary Estimation of a Coupled Markov Chain Credit Risk Model\footnote{This research was partly supported by the Austrian National Bank Jubil{\"a}umsfond Project 12306.}}
\author{Ronald Hochreiter\footnote{Department of Statistics and Mathematics, WU Vienna University of Economics and Business, Augasse 2-6, A-1090 Vienna, Austria. {\tt ronald.hochreiter@wu.ac.at}} \and David Wozabal\footnote{Department of Business Administration, University of Vienna, Br{\"u}nner Stra\ss e 72, A-1210 Vienna, Austria. {\tt david.wozabal@univie.ac.at}}}

\maketitle

\abstract{There exists a range of different models for estimating and simulating credit risk transitions to optimally manage credit risk portfolios and products. In this chapter we present a Coupled Markov Chain approach to model rating transitions and thereby default probabilities of companies. As the likelihood of the model turns out to be a non-convex function of the parameters to be estimated, we apply heuristics to find the ML estimators. To this extent, we outline the model and its likelihood function, and present both a Particle Swarm Optimization algorithm, as well as an Evolutionary Optimization algorithm to maximize the likelihood function. Numerical results are shown which suggest a further application of evolutionary optimization techniques for credit risk management.}

\section{Introduction}

Credit risk is one of the most important risk categories managed by banks. Since the seminal work of \cite{Merton1974} a lot of research efforts have been put into the development of both sophisticated and applicable models. Furthermore, de facto standards like CreditMetrics and CreditRisk$^+$ exist. Numerous textbooks provide an overview of the set of available methods, see e.g. \cite{DuffieSingleton2003}, \cite{McNeilEtAl2005}, and \cite{Schoenbucher2003}. Evolutionary techniques have not yet been applied extensively in the area of credit risk management - see e.g. \cite{HagerS06} for credit portfolio dependence structure derivations or \cite{ZhangAS2010} for optimization of transition probability matrices. In this chapter, we apply the Coupled Markov Chain approach introduced by \cite{KaniovskiPflug2007} and provide extensions to the methods presented in \cite{HochreiterWozabal2009}. Section \ref{sec:cmc} briefly describes the Coupled Markov Chain model and its properties, and outlines the data we used for subsequent sampling. The likelihood function, which is to be maximized is discussed in Section \ref{sec:ml}. A non-trivial method to sample from the space of feasible points for the parameters is outlined in Section \ref{sec:sampling}. Two different evolutionary approaches to optimize the maximum likelihood function are presented: in Section \ref{sec:psa} a Particle Swarm Algorithm is shown, and Section \ref{sec:evolutionary} introduces an Evolutionary Optimization approach. Section \ref{sec:numres} provides numerical results for both algorithmic approaches, while Section \ref{sec:conclusion} concludes the chapter.

\section{Coupled Markov Chain Model}
\label{sec:cmc}

\subsection{Model Description}

In the Coupled Markov Chain model proposed in \cite{KaniovskiPflug2007} company defaults are modeled directly as Bernoulli events. This is in contrast to standard models used in the literature where indirect reasoning via asset prices is used to model default events of companies. The advantage of the proposed approach is that there are no heavy model assumptions necessary (normality of asset returns, no transaction costs, complete markets, continuous trading \ldots).

Portfolio effects in structured credit products are captured via correlations in default events. Companies are characterized by their current rating class and a second characteristic which can be freely chosen (industry sector, geographic area, \ldots). This classification scheme is subsequently used to model joint rating transitions of companies. We keep the basic idea of the standard Gaussian Copula model
$$ X = \rho \tau + (1-\rho) \phi, $$
where $\tau$ is the idiosyncratic part and $\phi$ is the systematic part determining the rating transition, while $0\leq \rho \leq 1$ is a relative weighting factor. More specifically the Coupled Markov Chain model can be described as follows: A company $n$ belongs to a sector $s(n)$ and is assigned to a rating class $X_n^t$ at time $t$ with $X_n^t \in \set{0, \ldots, M+1}$ and $t: \; 1\leq t \leq T$, with the credit quality decreasing with rating classes, i.e. $(M+1)$ being the default class, while 1 is the rating class corresponding to the best credit quality. The ratings of company $n$ are modeled as Markov Chains $X^t_n$. The rating process of company $n$ is determined by
\begin{itemize}
\item an idiosyncratic Markov Chain $\xi^t_n$.
\item a component $\eta^t_n$ which links $n$ to other companies of the same rating class.
\item Bernoulli switching variables $\delta^t_n$ which decide which of the two factors determines the rating, with $ \Pd( \delta^{t+1}_n = 1) = q_{s(n), X_n^t},$ i.e. the probability of success depends on sector and rating.
\end{itemize}
All the $\xi^t_n$ and $\delta^t_n$ are independent of everything else, while the $\eta_n^t$ have a non-trivial joint distribution modeled by common Bernoulli tendency variables $\chi_i, \; i: 1\leq i \leq M$, such that
$$ \Pd(\eta_n^t \leq X_n^t) = \Pd(\chi_{X_n^{t-1}} = 1) \mbox{ and }\Pd(\eta_n^t > X_n^t) = \Pd(\chi_{X_n^{t-1}} = 0),$$
i.e. the variables $\chi_i$ are indicators for a (common) non-deteriorating move of all the companies in rating class $i$. The rating changes of companies in different rating classes are made dependent by the non-trivial probability mass function $P_\chi: \set{0,1}^M \to \Rd$ of the vector $\chi = (\chi_1, \ldots, \chi_M)$.

The Coupled Markov Chain model is of the form:
$$X_n^t = \delta_n^t \xi_n^t + (1-\delta_n^t) \eta_n^t.$$
and exhibits properties, which are interesting for practical application. It takes a transition matrix $P=(p_{i,j})$ as input which governs the probability of transitions for $\xi^t_n$ and $\eta_i^t$, i.e.
$$ \mathbb{P}( \xi_n^t = j) = p_{m(n),j} \mbox{ and } \mathbb{P}( \eta_i^t = j) = p_{i, j}.$$
The model is capable of capturing different default correlations for different sectors and rating classes, and is able to give a more accurate picture of closeness to default than the standard model by including more than two states. The overall transition probabilities of $X_n$ again follow $P$, i.e. $$\mathbb{P}(X_n = j ) = p_{m(n), j}.$$

\subsection{Data}

Rating data from Standard \& Poors has been used, whereby $10166$ companies from all over the world have been considered. The data consists of yearly rating changes of these companies over a time horizon of $23$ years up to the end of $2007$. In total a number of $87.296$ data points was used. The second characteristic is the SIC industry classification code. Sectoral information has been condensed to six categories: Mining and Construction (1), Manufacturing (2), Transportation, Technology and Utility (3), Trade (4), Finance (5), Services (6). Likewise, rating classes are merged in the following way: AAA, AA $\rightarrow$ 1, A $\rightarrow$ 2, BBB $\rightarrow$ 3, BB, B $\rightarrow$ 4, CCC, CC, C $\rightarrow$ 5, D $\rightarrow$ 6. These clusters allow for a more tractable model by preserving a high degree of detail. The estimated rating transition probabilities from the data are shown in Tab. \ref{tab:transprob}.
\begin{table}
$$ P = \left( \begin{array}{cccccc}  0.9191& 0.0753& 0.0044& 0.0009& 0.0001& 0.0001 \\ 0.0335 & 0.8958 & 0.0657& 0.0036& 0.0006& 0.0009 \\ 0.0080  &  0.0674 &   0.8554  &  0.0665 & 0.0011  &  0.0016 \\ 0.0039  &  0.0092 &   0.0794  &  0.8678  &  0.0244  &  0.0153 \\ 0.0023    & 0.0034  &  0.0045  &  0.1759  &  0.6009  &  0.2131 \\ 0 & 0 & 0 & 0 & 0 & 1 \end{array} \right)$$
\caption{Estimated rating transition probabilities}
\label{tab:transprob}
\end{table}

\section{Maximum Likelihood Function}
\label{sec:ml}

The approach proposed by \cite{KaniovskiPflug2007} takes a Markov transition matrix $P = (p_{m_1, m_2})_{1\leq m_1, m_2 \leq (M+1)}$ as an input, i.e.
$$ \sum_{i=1}^{M+1} p_{i, m} = \sum_{i=1}^{M+1} p_{m,i} = 1, \quad \forall m: 1 \leq m \leq (M+1).$$

For $(M+1)$ rating classes, $N$ companies and $S$ industry sectors the parameters of the model are a matrix $Q=(q_{m,s})_{1 \leq s \leq S, \; 1\leq m \leq M}$ and a probability measure $P_\chi$ on $\set{0,1}^M$ satisfying some constraints dependent on $P$ (see problem \eqref{genProb}). Given rating transition data $X$ ranging over $T$ time periods we maximize the following monotone transformation of the likelihood function of the model
$$L(X; Q, P_{\chi}) = \sum_{t=2}^T log\left( \sum_{\bar{\chi} \in \{ 0, 1 \}^M} P_\chi( \chi^t = \bar{\chi} ) \prod_{s, m_1, m_2} f(x^{t-1}, s, m_1, m_2,; Q, P_\chi) \right)$$
with
$$ f(x^{t-1}, s, m_1, m_2,; Q, P_\chi) = \begin{cases} \left( \frac{q_{m_1,s}(p_{m_1}^+-1)  +1}{p_{m_1}^+} \right)^{I^t}, & m_1 \geq m_2, \; \bar{\chi}_{m_1} = 1\\
\left( \frac{q_{m_1,s}(p_{m_1}^- -1)  +1}{p_{m_1}^-} \right)^{I^t}, & m_1 < m_2, \; \bar{\chi}_{m_1} = 0 \\
q_{m_1, s}^{I^t}, & \mbox{otherwise}. \\
\end{cases}$$
where $I^t \equiv I^t(m_1, m_2, s)$ is a function dependent on the data $X$ which takes values in $\Nd$, $p_m^+ = \sum_{i=1}^m p_{m,i}$ and $p_m^- = 1- p_m^+$.

The above function is clearly non-convex and since it consists of a mix of sums and products this problem can also not be overcome by a logarithmic transform. Maximizing the above likelihood for given data $X$ in the parameters $P_\chi$ and $Q$ amounts to solving the following constrained optimization problem
\begin{eqnarray} \label{genProb}
\begin{array}{llll}
\max_{Q, P_\chi} & L(X; Q, P_{\chi}) & \\
s.t. 	& q_{m, s} & \in [0,1] \\
		& \sum_{\bar{\chi}: \bar{\chi}_i = 1} P_\chi(\bar{\chi}) &= p_{m_i}^+, & \forall i: 1 \leq i \leq M \\
		& \sum_{\bar{\chi}: \bar{\chi}_1 = 0} P_\chi(\bar{\chi}) &= 1 - p_{i}^+. \\
\end{array}
\end{eqnarray}

\section{Sampling Feasible Points}
\label{sec:sampling}

To sample from the space of feasible joint distributions for $\chi$ (i.e. the distributions whose marginals meet the requirements in \eqref{genProb}), first note that the distributions $P_\chi$ of the random variable $\chi$ are distributions on the space $\set{0,1}^M$ and therefore can be modeled as vectors in $\Rd^{2^M}$. To obtain samples we proceed as follows.

\begin{enumerate}

	\item To get a central point in the feasible region, we solve the following problem in dependence of a linear functional $\Psi: \Rd^{2^M} \to \Rd$.
		\begin{eqnarray} \label{startProb}
		\begin{array}{llll}
		\max_{P_\chi} & \Psi(P_\chi) &  \\
		s.t. 	& q_{m, s} & \in [0,1] \\
				& \sum_{\bar{\chi}: \bar{\chi}_i = 1} P_\chi(\bar{\chi}) &= p_{m_i}^+, & \forall i: 1 \leq i \leq M \\
				& \sum_{\bar{\chi}: \bar{\chi}_1 = 0} P_\chi(\bar{\chi}) &= 1 - p_{i}^+. \\		\end{array}
		\end{eqnarray}
	and call the solution set $\Sc(\Psi)$. By generating linear $\Psi$ functionals with random coefficients and picking $x^+ \in \Sc(\Psi)$ and $x^- \in \Sc(-\Psi)$ we get vertices of the feasible set of distributions for $\chi$ modeled as a polyhedron in $\Rd^{2^M}$. In this way we generate a set of vertices $V$ for the feasible region $\Omega$ of the above problem. Note that to enforce all the constraints in \eqref{startProb} we need $M+1$ linear equality constraints which describe a $2^M - (M+1)$ dimensional affine subspace in $\Rd^{2^M}$.
	
	\item Get a central point in $c\in \Omega$ by defining $$ c = \frac{1}{|V|} \sum_{v \in V} v.$$
	
	\item Sample $K \in \Nd$ directions of unit length from a spherical distribution (like the multivariate standard normal with independent components) in $\Rd^{2^M - M -1}$ to get uniformly distributed directions. Map these directions to $\Rd^{2^M}$ using an orthogonal basis of the affine subspace $A$ of $\Rd^{2^M}$ described by the last set of constraints in \eqref{startProb} to obtain a set of directions $\Dc$ in $A$.
	
	\item For every $d \in \Dc$ determine where the line $c + \lambda d$ meets the boundary of the feasible set of \eqref{startProb}. Call the length of the found line segment $l_d$.
	
	\item Fix a number $L \in \Nd$ and sample
	$$\left\lceil \frac{l_d}{\sum_{d\in \Dc} \bar{l}_d} L \right\rceil$$
	points on the line $l_d$. In this way we get approximately $KL$ samples for $P_\chi$.
	
\end{enumerate}

Contrary to obtaining samples from $P_\chi$, getting suitable samples for $Q$ is fairly easy, since all the components are in $[0,1]$ and independent of each other. Note that both the set of feasible matrices $Q$ as well as the set of feasible measures $P_\chi$ are convex sets in the respective spaces.

\section{Particle Swarm Algorithm}
\label{sec:psa}

In the following we give a brief description of the Particle Swarm Algorithm (PSA), which follows the ideas in \cite{KennedyEberhart1995}.

\begin{enumerate}
	\item Choose $\delta > 0$ and $S \in  \Nd$.
	
	\item Generate $S$ permissible random samples $x_k = (Q^k, P_\chi^k)$ for $k=1, \ldots, S$ as described above, i.e. $q_{s,m}^k \in [0,1]$ and $P_\chi$ is consistent with the constraints in \eqref{startProb}. Each sample is a particle in the algorithm. Set $\hat{x}_k = x_k$ and $v_k = 0$ for all $k=1, \ldots, S$.

	\item Set $\hat{g} \leftarrow \arg \min_k L(x_k)$.

	\item For all particles $x_k$
		\begin{enumerate}
			\item Let the particles fly by first computing a velocity for the $k$-th particle
			\begin{equation} \label{psaFly}
			v_k \leftarrow c_0 v_k + c_1 r_1 \circ (\hat{x}_k - x_k) + c_2 r_2 \circ (\hat{g} - x_k)
			\end{equation}
			where $c_0$, $c_1$, $c_2$ are fixed constants, $r_1$ and $r_2$ are random matrices (component-wise uniform) of the appropriate dimension and $\circ$ is the Hadamard matrix multiplication.  Then a new position for the particle is found by the following assignment
			$$ x_k \leftarrow x_k + v_k.$$

			\item If $L(x_k) > L(\hat{x}_k)$ then $\hat{x}_k \leftarrow x_k$.
		\end{enumerate}
	\item $L(x_k) > L(\hat{g})$ for some $x_k$, then $\hat{g} \leftarrow x_k$.

	\item If $var(L(x_k))< \delta$ terminate the algorithm, otherwise go to step 3.
\end{enumerate}
The main idea is that each particle $k$ knows its best position $\hat{x}_k$ as of yet (in terms of the likelihood function) and every particle knows the best position ever seen by any particle $\hat{g}$. The velocity of the particle changes in such a way that it is drawn to these positions (to a random degree). Eventually all the particles will end up close to one another and near to a (local) optimum.

Note that in step 4(a) of the above algorithm, a particle may leave the feasible region either by violating the constraints on $P_\chi$ or $Q$. In this case the particle \emph{bounces of the border} and completes it's move in the modified direction. To be more precise: the particles can only leave the feasible region, by violating the constraints that either elements of $Q$ or probabilities assigned by $P_\chi$ are no longer in $[0,1]$ (the last two constraints in \eqref{startProb} can not be violated since the particles only move in the affine subspace of $\Rd^{2^M}$ where these constraints are fulfilled).

If $x_k^i + v_k^i > 1$ for some $1 \leq i \leq 2^{2^M} + MS$ (the case where $x_k^i + v_k^i <0$ works analogously), determine the maximum distance $\lambda$ that the particle can fly without constraint violation, i.e. set $$\lambda = \frac{(1-x_k^i)}{v_k^i}$$
and set $x^k \leftarrow x^k + \lambda v^k$. Now set $\bar{v}_k$ such that the new velocity makes the particle bounce off the constraint \emph{as would be expected} and make the rest of the move, i.e. set
$$ x^k \leftarrow x^k + (1-\lambda) \bar{v}^k.$$
In the case that the violation concerns an element of $Q$ the modification only concerns a change of sign, i.e.  $\bar{v}_k^i \leftarrow -v_k^i$ and $\bar{v}_k^j \leftarrow v_k^j$ for all $j \neq i$. 

In the following we describe how to find a \emph{bounce off} direction, if a constraint on an element of $P_\chi$ is violated: first determine the hyperplane $H$ in $\Rd^{2^M}$ that represents the constraint. Notice that $H = \set{ x \in \Rd^{2^M}: x_i = 1}$ for some $i$. We use the following Lemma to get a representation of the hyperplane $H$ in the affine subspace $A$.

\begin{lemma}
Let $A$ be an affine subspace of $\Rd^D$ with orthonormal basis $e_1, \ldots, e_d$ with $D<d$ and $H$ a hyperplane with normal vector $n$. The normal vector of the hyperplane $A\cap H$ in $A$ is $$\bar{n} = \sum_{i=1}^d \langle e_i, n \rangle e_i.$$
\end{lemma}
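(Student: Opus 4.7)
The plan is to reduce the statement from an affine to a linear one, recognize $\bar{n}$ as an orthogonal projection, and then verify the two defining properties of a normal vector of $A\cap H$ inside $A$: being tangent to $A$ and being orthogonal to every direction of $A\cap H$.

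First I would translate by an arbitrary point of $A$ so that we may assume $A$ and $A\cap H$ are linear subspaces through the origin; this does not affect the statement because it only involves directions. Write $V=\mathrm{span}(e_1,\ldots,e_d)$ for the direction space of $A$. The crucial observation is that the proposed $\bar{n}=\sum_{i=1}^d \langle e_i,n\rangle e_i$ is exactly the orthogonal projection of $n$ onto $V$, written in the orthonormal basis $e_1,\ldots,e_d$. In particular $\bar{n}\in V$, which is the first requirement.

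Next I would verify the orthogonality. Let $v$ be any direction in $A\cap H$, so that $v\in V$ and $\langle v,n\rangle=0$. Since $v\in V$ we may expand $v=\sum_{i=1}^d \langle e_i,v\rangle e_i$, and then
\begin{equation*}
\langle v,\bar{n}\rangle \;=\; \sum_{i=1}^d \langle e_i,v\rangle \langle e_i,n\rangle \;=\; \Bigl\langle \sum_{i=1}^d \langle e_i,v\rangle e_i,\,n\Bigr\rangle \;=\; \langle v,n\rangle \;=\; 0.
\end{equation*}
Thus $\bar{n}\in V$ is orthogonal to the $(d-1)$-dimensional subspace of tangent directions of $A\cap H$, which characterizes it (up to scalar) as a normal vector to $A\cap H$ inside $A$.

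The only subtlety, and the one point I would flag explicitly, is the degenerate case $\bar{n}=0$. This occurs precisely when $n\perp V$, i.e.\ when $A$ is parallel to $H$; then $A\cap H$ is either empty or all of $A$, and the notion of a normal vector inside $A$ is vacuous, so the lemma is only used in the non-degenerate situation where the projection is nonzero. Apart from this remark, the argument is a short computation and the ``hard part'' is really just bookkeeping between the ambient space $\Rd^D$ and the affine subspace $A$.
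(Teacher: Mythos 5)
Your proof is correct and rests on the same key identity as the paper's: $\langle v,\bar{n}\rangle=\langle v,n\rangle$ for every $v$ in the span of $e_1,\ldots,e_d$, obtained by expanding $v$ in the orthonormal basis and moving the coefficients across the inner product. The paper phrases this as deriving the defining equation $\langle y,\bar{n}\rangle=c$ of $A\cap H$ inside $A$, while you phrase it as orthogonality of $\bar{n}$ to the tangent directions of $A\cap H$ (and you additionally flag the degenerate case $\bar{n}=0$, which the paper omits); these are essentially the same argument.
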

\begin{proof}
Let $H = \set{ x \in \Rd^D: \langle x, n \rangle = c}$ for some $c \in \Rd$, then
\begin{eqnarray*}
c &=& \langle y, n \rangle = \left\langle \sum_{i=1}^d \langle y, e_i \rangle e_i, n \right\rangle = \sum_{i=1}^d \langle y, e_i \rangle \langle e_i, n \rangle \\
&=& \left\langle y, \sum_{i=1}^d \langle e_i, n \rangle e_i \right\rangle = \langle y, \bar{n} \rangle.
\end{eqnarray*}
This implies that the point $y \in A \cap H$, iff $\langle y, \bar{n} \rangle = c$.
\end{proof}

Using the above Lemma we identify the normal vector $\bar{n} \in A$ of the hyperplane $\bar{H} = H \cap A$ in $A$. Without loss of generality we assume that $|| \bar{n}|| = 1$. Now use Gram-Schmidt to identify a orthonormal system $\bar{n}, y_2, \ldots, y_{2^{2^M} - (M+1)}$ in $A$ and represent $v_k$ as
$$ v_k = \langle \bar{n}, v_k \rangle \bar{n} + \sum_{i \geq 2} \langle y_i, v_k \rangle y_i.$$
The transformed velocity can now be found as
$$ \bar{v}_k = -\langle \bar{n}, v_k \rangle \bar{n} + \sum_{i \geq 2} \langle y_i, v_k \rangle y_i.$$

Obviously an implementation of the algorithm has to be able to handle multiple such bounces in one move (i.e. situation where the new direction $\bar{v}_k$ again leads to a constraint violation). Since the details are straightforward and to avoid too complicated notation, we omit them here for the sake of brevity.

\section{Evolutionary Algorithm}
\label{sec:evolutionary}

Evolutionary algorithms are well suited to handle many financial and econometric applications, see especially \cite{BrabazonONeill2008}, \cite{BrabazonONeill2009}, and \cite{BrabazonONeill2006} for a plethora of examples.

Each chromosome consists of a matrix $Q$ and a vector $P_\chi$. While the parameters in $Q$ can be varied freely between $0$ and $1$, and the parameters $P_\chi$ do need to fulfill constraints (see above), the genetic operators involving randomness are mainly focused around the matrix $Q$. Therefore, four different genetic operators are used:
\begin{itemize}
\item {\bf Elitist selection.} A number $e$ of the best chromosomes are added to the new population.
\item {\bf Intermediate crossover.} $c$ intermediate crossovers (linear interpolation) between the matrix $Q_1$ and $Q_2$ of two randomly selected parents are created using a random parameter $\lambda$ between $0$ and $1$, i.e. two children $Q_3, P_{\chi,3}$ and $Q_4, P_{\chi,4}$ are calculated as follows:
$$Q_3 = \lambda Q_1 + (1-\lambda) Q_2, P_{\chi,3} = P_{\chi, 1},$$
$$Q_4 = (1-\lambda) Q_1 + \lambda Q_2, P_{\chi,4} = P_{\chi, 2}.$$
\item {\bf Mutation.} $m$ new chromosomes are added by mutating a randomly selected chromosome from the parent population, and adding a factor $\phi$ in the range $[-0.5,0.5]$ to the matrix $Q$. The values are truncated to values between 0 and 1 after the mutation.
\item {\bf Random additions.} $r$ random chromosomes are added with a random matrix $Q$ and a randomly selected vector $P_\chi$ from the parent population.
\end{itemize}

\section{Numerical Results}
\label{sec:numres}
Both algorithms were developed in MatLab R2007a, while the linear problems \eqref{startProb} were solved using MOSEK 5. A stability test has been conducted to validate the results of both optimization algorithms: the maximum (pointwise) differences of parameter estimates $P_\chi$ and $Q$ between the different optimization runs is used to verify that these important parameters, which are e.g. used for a credit portfolio optimization procedure, do not differ significantly.

\subsection{Particle Swarm Algorithm}
The parameters in \eqref{psaFly} were set to $c_0 = 0.5$, $c_1 = 1.5$ and $c_2 = 1.5$. The algorithm was made to complete $150$ iterations with around $200$ initial samples (where the $\chi$ are simulated on $40$ lines with approximately $5$ samples each). To test the stability of the algorithm $50$ runs of the algorithm were performed. As can be clearly observed in Fig. \ref{fig:psaresults1} the likelihood of the best particle as well as the mean likelihood of the swarm converges nicely and stabilizes around iteration $25$.

\begin{figure}
\begin{center}
\scalebox{0.2}{
\includegraphics{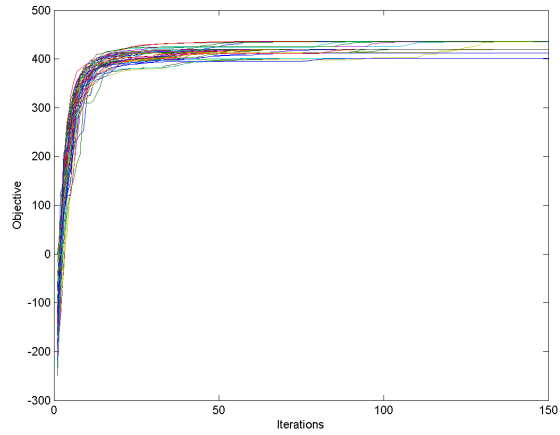}
}
\caption{Objective function of the PSA algorithm: maximum per iteration.}
\label{fig:psaresults1}
\end{center}
\end{figure}

\begin{figure}
\begin{center}
\scalebox{0.2}{
\includegraphics{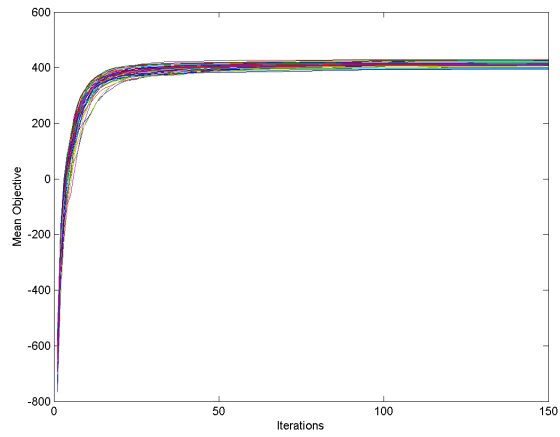}
}
\caption{Objective function of the PSA algorithm: population mean.}
\label{fig:psaresults1b}
\end{center}
\end{figure}

Each run took around $1$ hour to complete $150$ iterations. Stability results are shown in Fig. \ref{fig:psaresults2}.

\begin{figure}
\begin{center}
\scalebox{0.2}{
\includegraphics{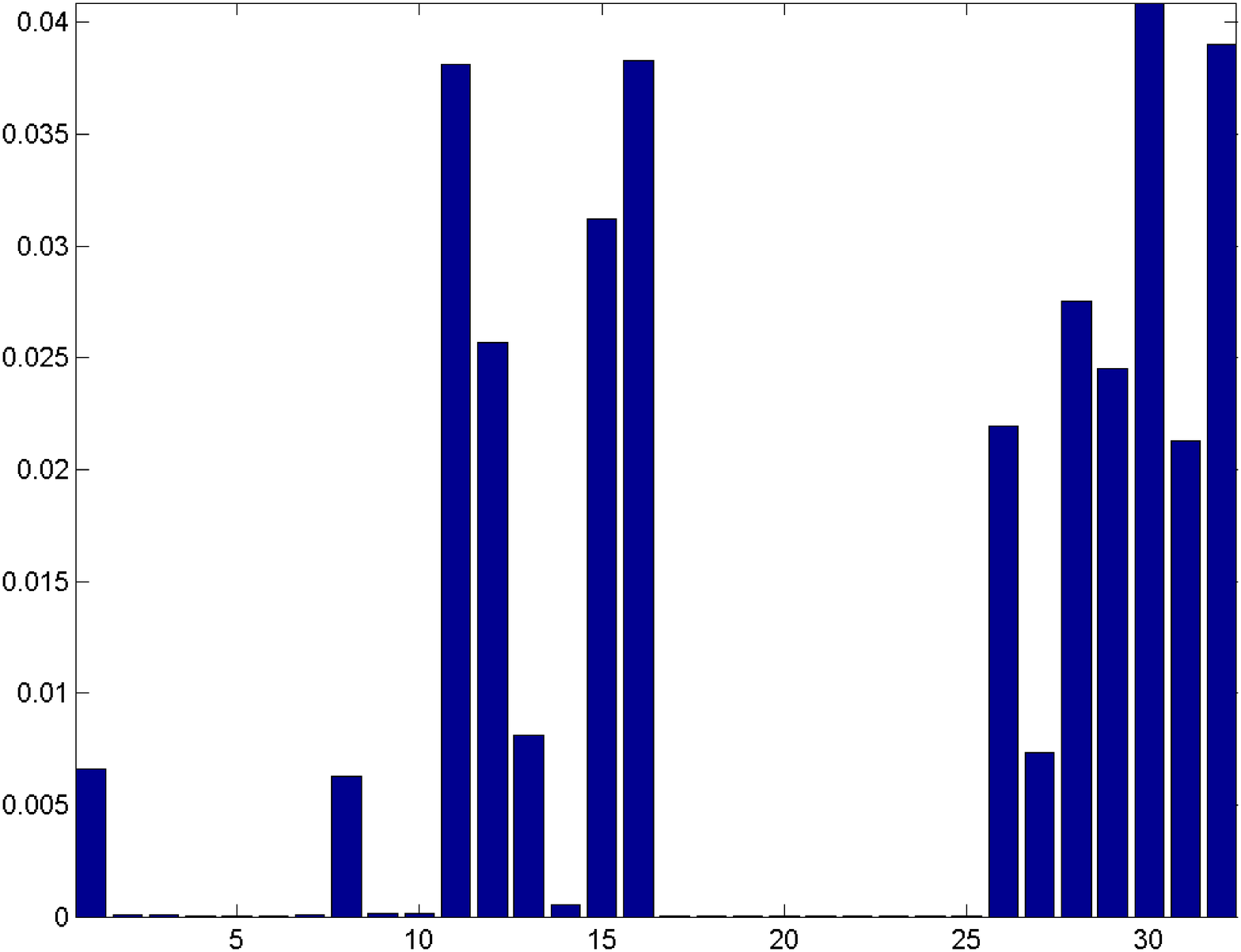}
}
\caption{Maximum (pointwise) differences of parameter estimates $P_\chi$ for different runs for the PSA.}
\label{fig:psaresults2}
\end{center}
\end{figure}

\begin{figure}
\begin{center}
\scalebox{0.2}{
\includegraphics{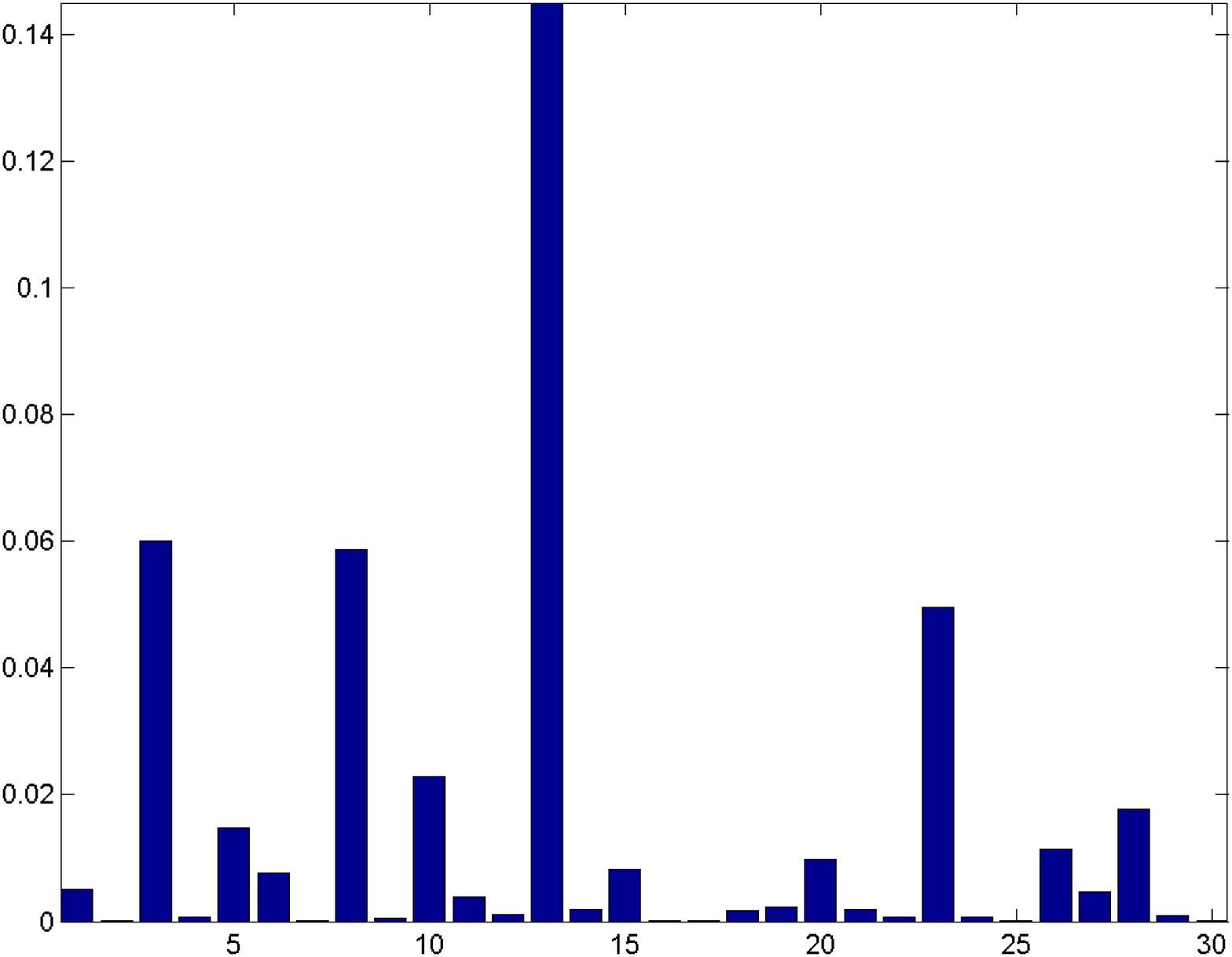}
}
\caption{Maximum (pointwise) differences of parameter estimates $Q$ for different runs for the PSA.}
\label{fig:psaresults2b}
\end{center}
\end{figure}

The variances of the populations in every iterations are plotted in Figure \ref{fig:psaVar}. Since the variances sharply increase from very high initial values and in most cases drop to rather low values quickly the plot depicts the variance after applying a logarithmic transformation (base 10) as well as the mean variance over all the runs. While in most runs the variances decreases from values of the magnitude $10^5$ to the range of $10^3$, some of the runs end up with significantly lower and higher variances. The latter being a sign that the PSA sometimes fails to converge, i.e. the particles do not concentrate at one point after the performed number of iterations.
However, this is not problematic since we are only interested in the likelihood of the best particle which seems to be pretty stable at around 400. The results depicted in Figure \ref{fig:psaresults2} confirm, that despite the high variance in some of the runs the likelihood of the best particle remains stable for all the runs.

\begin{figure}
\begin{center}
\scalebox{0.2}{\includegraphics{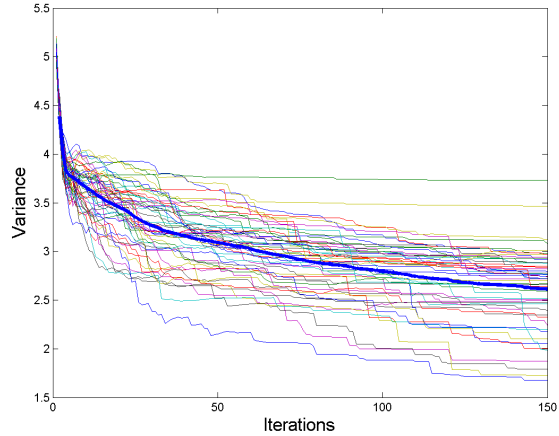}}
\caption{Variances of the objective values of the swarm (variances are transformed with $x \mapsto log_{10}(x)$ for better interpretability of the results. The mean (logarithmic) variance is depicted by the bold line.}
\label{fig:psaVar}
\end{center}
\end{figure}

\subsection{Evolutionary Algorithm}

The following parameters have been used to calculate results: The number of maximum iterations has been set to $150$. Each new population consists of $e = 30$ elitist chromosomes, $c=50$ intermediate crossovers, $m=100$ mutations, and $r=50$ random additions. $50$ runs have been calculated, and $10$ tries out of these are shown in Fig. \ref{fig:evoresults1} - both the maximum objective value per iteration (left) as well as the population mean (right). Due to the high number of random additions, mutations and crossovers, the mean is relatively low and does not significantly change over the iterations, which does not influence the results. The initial population size were $750$ randomly sampled chromosomes, independently sampled for each try. It can be clearly seen, that due to the different algorithmic approach, the convergence is different from the PSA.

\begin{figure}
\begin{center}
\scalebox{0.2}{
\includegraphics{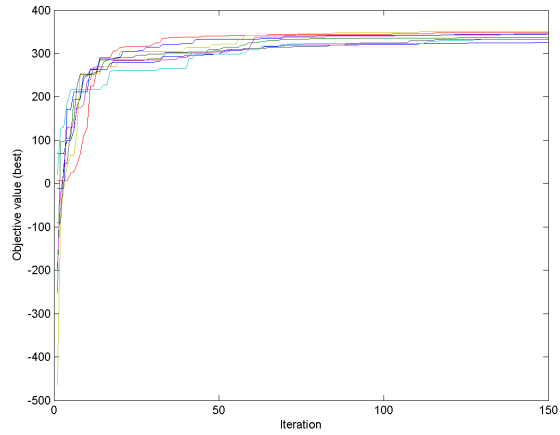}
}
\caption{Objective function of the EA: maximum per iteration.}
\label{fig:evoresults1}
\end{center}
\end{figure}

\begin{figure}
\begin{center}
\scalebox{0.2}{
\includegraphics{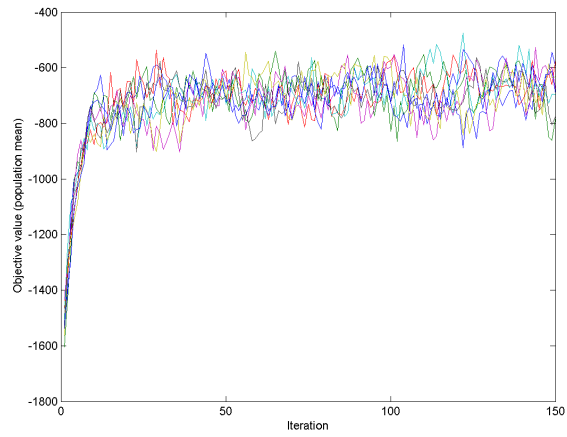}
}
\caption{Objective function of the EA: population mean.}
\label{fig:evoresults1b}
\end{center}
\end{figure}

Each run took approximately $70$ minutes to complete the $150$ iterations. Stability results are shown in Fig. \ref{fig:evoresults2}. The population variance is shown in Fig. \ref{fig:evoresults3}, and clearly exhibits a different behavior than the PSA algorithm as expected.

\begin{figure}
\begin{center}
\scalebox{0.2}{
\includegraphics{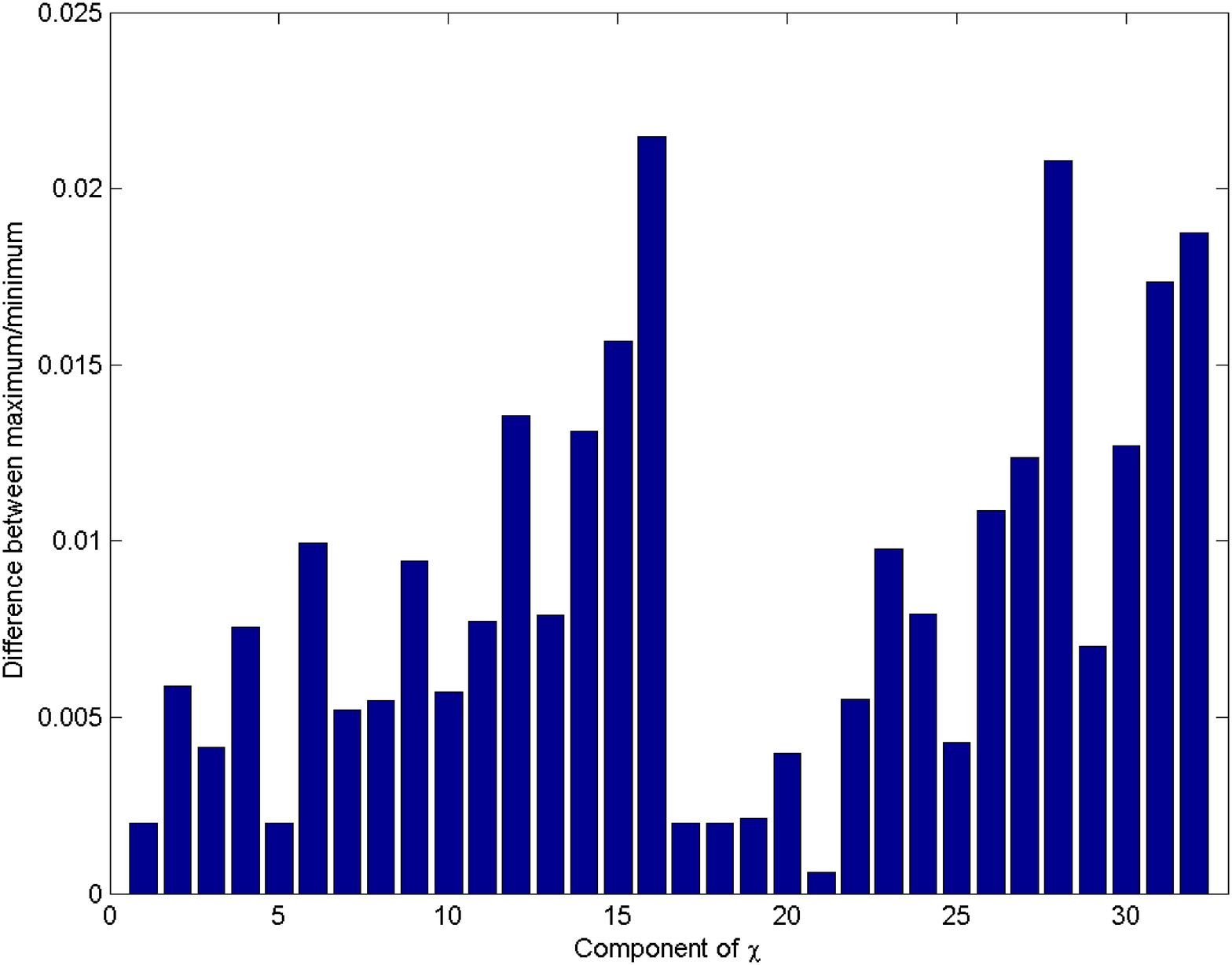}
}
\caption{Maximum (pointwise) differences of parameter estimates $P_\chi$ for different runs for the EA.}
\end{center}
\label{fig:evoresults2}
\end{figure}

\begin{figure}
\begin{center}
\scalebox{0.2}{
\includegraphics{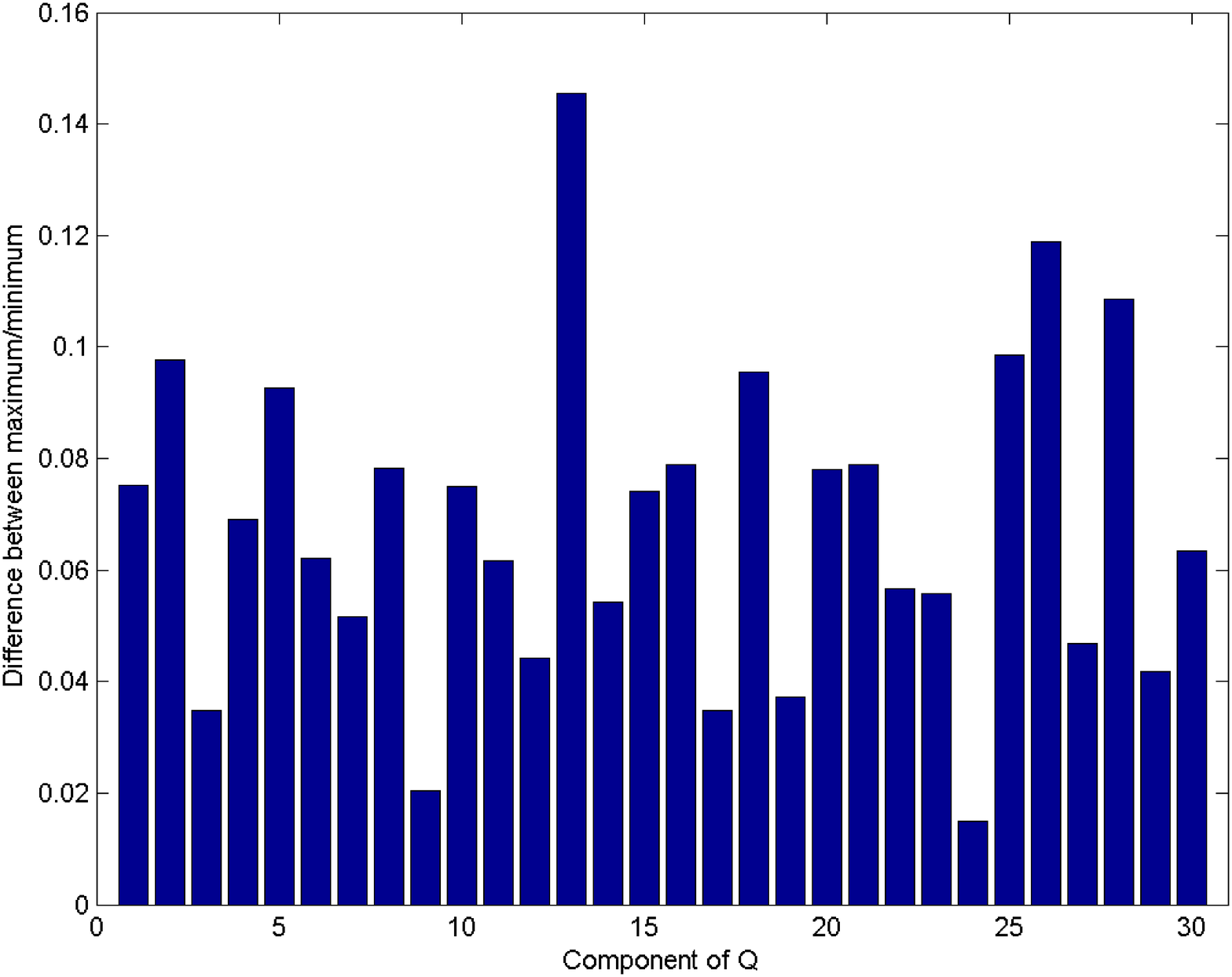}
}
\caption{Maximum (pointwise) differences of parameter estimates $Q$ for different runs for the EA.}
\label{fig:evoresults2b}
\end{center}
\end{figure}

\begin{figure}
\begin{center}
\scalebox{0.2}{
\includegraphics{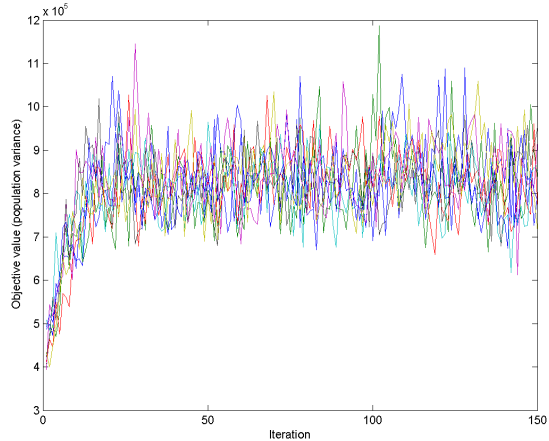}
}
\caption{Population variance of the EA.}
\label{fig:evoresults3}
\end{center}
\end{figure}

\subsection{Comparison of Methods}
\label{sec:comparison}

Comparing the results of the current implementations of the two optimization heuristics the results found by the PSA consistently yield a higher objective value than the solutions obtained with the EA (for the best particle/chromosome as well as for the mean). The computing time for the two methods is similar and is mainly used for the expensive objective function evaluations. Furthermore the presented computational evidence shows the typical behavior of the variance given the two heuristic optimization techniques. While the PSA generally performs slightly better than the EA, it might well be that it gets stuck in a local optimum, which might be avoided using the EA. One can see from the figures that the maximum difference between the estimated parameters for different runs are smaller on average for the PSA. However, the analysis of the distribution of these differences reveals the interesting fact, that while for the EA the differences are more uniform in magnitude and the highest as well as the lowest deviations can be observed for the PSA. With the realistically sized data set both methodologies are well suited and the final choice is up to the bank or company which implements and extends the presented method, i.e. has to be based on the expertise available.

\subsection{Application of the Model}
\label{sec:application}

Once the Coupled Markov Chain model has been estimated using evolutionary techniques shown above, it can be used to simulate rating transition scenarios for different sets of companies, which allows for pricing and optimization of various structured credit contracts like specific CDX tranches, e.g. a Mean-Risk optimization approach in the sense of \cite{Markowitz1987} can be conducted for which evolutionary techniques can be used again as shown by e.g. \cite{Hochreiter07} and \cite{Hochreiter08}, such that a whole credit risk management framework based on evolutionary techniques can be successfully implemented.

\section{Conclusion}
\label{sec:conclusion}

In this Chapter, we presented the likelihood function for a Coupled Markov Chain model for contemporary credit portfolio risk management. We presented two different heuristic approaches for estimating the parameter of the likelihood function. Both are structurally different, i.e. the population mean of each method differs significantly. However, both are valid approaches to estimate parameters. Once the parameters are estimated, many applications are possible. One prominent example is to generate scenarios for future payment streams implied by an existing portfolio of Credit Default Swap Indices (CDX) by Monte Carlo simulation. This allows for assessing the risk of the current position and price products which might be added to the portfolio in the future and thereby determine their impact on the overall exposure.

\bibliographystyle{plain}
\bibliography{evocmcbook}

\end{document}